\pgfplotsset{compat=newest}
\theoremstyle{plain}
\newtheorem{thm}{Theorem} % reset theorem numbering for each chapter
\theoremstyle{definition}
\newtheorem{defn}{Definition} % definition numbers are independent from theorem numbers
\newcommand*{\algrule}[1][\algorithmicindent]{%
  \makebox[#1][l]{%
    \hspace*{.2em}% <------------- This is where the rule starts from
    \vrule height .75\baselineskip depth .25\baselineskip
  }
}
\def\ALG@printindent{%
    \ifnum \theALG@nested>0% is there anything to print
    \ifx\ALG@text\ALG@x@notext% is this an end group without any text?
    % do nothing
    \else
    \unskip
    % draw a rule for each indent level
    \ALG@printindent@tempcnta=1
    \loop
    \algrule[\csname ALG@ind@\the\ALG@printindent@tempcnta\endcsname]%
    \advance \ALG@printindent@tempcnta 1
    \ifnum \ALG@printindent@tempcnta<\numexpr\theALG@nested+1\relax
    \repeat
    \fi
    \fi
}
\patchcmd{\ALG@doentity}{\noindent\hskip\ALG@tlm}{\ALG@printindent}{}{\errmessage{failed to patch}}
\patchcmd{\ALG@doentity}{\item[]\nointerlineskip}{}{}{} % no spurious vertical space
\newcommand{\RNum}[1]{\uppercase\expandafter{\romannumeral #1\relax}} % Roman number
\let\emptyset\varnothing
\patchcmd{\paragraph}{\itshape}{\bfseries\boldmath}{}{}
\newcommand{\keywords}[1]{\par\addvspace\baselineskip
\noindent\keywordname\enspace\ignorespaces#1}
\begin{document}

\title{High-Utility Interval-Based Sequences}
\author{S. Mohammad Mirbagheri\textsuperscript{(\Letter)} \and Howard J. Hamilton}
%
%\authorrunning{S. Mohammad Mirbagheri et al.} % abbreviated author list (for running head)
%
%%%% list of authors for the TOC (use if author list has to be modified)
%\tocauthor{S.Mohammad Mirbagheri, Howard Hamilton}
%
\institute{Department of Computer Science, University of Regina, Regina, Canada\\
\email {\{mirbaghs,Howard.Hamilton\}}@uregina.ca}
%\\
%\email{Howard.Hamilton@uregina.ca}

\maketitle              % typeset the title of the contribution

\begin{abstract}
Sequential pattern mining is an interesting research area with broad range of applications.
Most prior research on sequential pattern mining has considered point-based data where events occur instantaneously. However, in many application domains, events persist over intervals of time of varying lengths. Furthermore, traditional frameworks for sequential pattern mining assume all events have the same weight or utility. This simplifying assumption neglects the opportunity to find informative patterns in terms of utilities, such as cost. To address these issues, we incorporate the concept of utility into interval-based sequences and define a framework to mine high utility patterns in interval-based sequences i.e., patterns whose utility meets or exceeds a minimum threshold. In the proposed framework, the utility of events is considered while assuming multiple events can occur coincidentally and persist over varying periods of time. An algorithm named \textit{High Utility Interval-based Pattern Miner} (\textit{HUIPMiner}) is proposed and applied to real datasets. To achieve an efficient solution, HUIPMiner is augmented with a pruning strategy. Experimental results show that HUIPMiner is an effective solution to the problem of mining high utility interval-based sequences.
\keywords{High utility interval-based, utility mining, sequential mining, temporal pattern, event interval sequence }
\end{abstract}
\section{Introduction}
\textit{Sequential pattern mining} aims to find patterns from data recorded sequentially along with their time of occurrence. Depending on the application scenario, symbolic sequential data is categorized as either \textit{ point-based} or \textit{interval-based}. Point-based data reflect scenarios in which events happen instantaneously or events are considered to have equal time intervals. Duration has no impact on extracting patterns for this type. Interval-based data reflect scenarios where events have unequal time intervals; here, duration plays an important role.

%Most prior research on sequential data mining has considered point-based data. SPADE \cite{zaki2001spade}, FP-growth \cite{FPGrowth2000}, and PrefixSpan \cite{han2001prefixspan} are well-known algorithms for mining frequent sequential patterns from this type of data. 
%Point-based sequential algorithms may perform poorly when applied to interval-based data, because they ignore the duration of the events and the temporal relations between events. To overcome this problem, interval-based sequential data mining has emerged.

In many application domains, such as medicine \cite{patelHepatit,2019medical}, sensor technology \cite{morchenSensor}, sign language \cite{signLanguage}, and motion capture \cite{motion2016}, events persist over intervals of time of varying lengths, which results in complicated temporal relations among events. Thirteen possible temporal relations between a pair of event intervals were nicely categorized by Allen \cite{allen1983maintaining}. 
Some studies have been devoted to mining frequent sequential patterns from interval-based data and describing the temporal relations among the event intervals. 
%Kim and Fu introduced a hierarchical representation that can be used to describe the temporal relations among events \cite{kam2000discovering}. They recursively defined a temporal pattern as either an atomic pattern or a composite pattern. Then, they proposed an Apriori-like algorithm to find these patterns by generating candidates based on a minimum support and a window size (maximum length of time interval). Unforunately, the patterns represented in this way are ambiguous; a particular temporal relation among events may not be represented by a unique pattern and also a pattern produced in this manner may be mapped to more than one temporal relation. Therefore, 
Wu and Chen \cite{wu2007mining} presented a nonambiguous representation of temporal data utilizing the beginning and ending time points of the events. By adapting the PrefixSpan \cite{han2001prefixspan}, they proposed the TPrefixSpan algorithm to mine frequent temporal sequences. 
%The concept of coincidence was proposed by M{\"o}rchen and Ultsch \cite{TSKR}, where a representation called TSKR was introduced.
Chen et al. \cite{chen2010efficient} proposed the coincidence representation to simplify the processing of complex relations among event intervals. They also proposed an algorithm named CTMiner to discover frequent time-interval based patterns in large databases. 
%Later, they proposed a modified approach \cite{chen2015mining} to discover probabilistic sequential patterns that describe correlations among intervals and to indicate the occurrence probability, which enables users to predict future activities.

The aforementioned work has focused on representations of temporal data and discovering frequent temporal patterns. However, \textit{frequent pattern mining} (FPM) may not be the right solution to problems where the weight of patterns may be the major factor of interest and the frequency of patterns may be a minor factor. The weight of a pattern can be interpreted differently depending on the problem or scenario. For example, it may represent the profit or the cost that a business experiences when
a particular pattern occurs. Some patterns of interest may have high weights but low frequencies. Thus, FPM may miss patterns that are infrequent but valuable. FPM may also extract too many frequent patterns that are low in weight. To address these problems, \textit{high utility pattern mining} (HUPM) has emerged as an alternative to FPM. The goal of HUPM is to extract patterns from a dataset with utility no less than a user-specified minimum utility threshold. 

Tackling the HUPM problem requires facing more challenges than FPM. The major FPM algorithms rely on the downward closure property (also known as the Apriori Property) \cite{srikant1996mining} to perform efficiently. This property, which is utilized by most pruning strategies, states that if a pattern is frequent then all of its sub-patterns are frequent and if a pattern is infrequent all of its super-patterns are infrequent. However, this property does not hold in utility mining because the utilities of patterns are neither monotone nor anti-monotone \cite{Hamilton2006}. As a result, the existing optimization approaches for FPM are not applicapable to HUPM. To cope with this challenge, previous studies introduced several domain-dependent weighted downward closure properties, including the transaction-weighted downward closure property (TDCP) \cite{ahmed2010} for itemset pattern mining, the sequence-weighted downward closure property (SDCP) \cite{uspan} for sequential pattern mining, and the episode-weighted downward closure property (EDCP) \cite{wu2013,Episode2019} for episode pattern mining.
  
Most prior studies on HUPM have been devoted to transactional data rather than sequential data. 
However, such studies do not address the problem of HUPM in interval-based sequences, which covers a wide range of applications mentioned above. Interval-based applications can be better described when the concept of utility is employed. 
For example, interval-based sequences commonly occur in businesses where different services or packages, which persist over time, are offered to customers. Providing informative patterns to policy makers is an essential task, especially in a competitive marketplace. 
Neglecting the fact that these services or packages have various utilities (or weights) results in misleading information.   
For instance, HUPM can be beneficial to telecommunication companies or insurance companies which sell products that last over varying periods of time at various costs. 

To the best of our knowledge, Huang et al. \cite{huang2019} recently made the first attempt to mine interval-based sequence data for patterns based on utility. They suggested a method to discover the top-K high utility patterns (HUPs). Their approach consists of two main parts. It first discovers a set of frequent patterns, and then it extracts the top-K HUPs from the set.
This indirect approach suffers from a major drawback. The set of frequent patterns may not contain all HUPs. Hence, the approach may miss some high utility but infrequent patterns and consequently, it may select low-ranked HUPs as the top-K HUPs.
        
For the above reasons, we formalize the problem of the mining of high utility interval-based patterns (HUIPs) from sequences
and propose a new framework to solve this problem.
The major contributions of this work are as follows: 
\begin{inparaenum}[1)]
\item We propose the \textit{coincidence eventset representation} (\textit{CER}) to represent interval-based events.
%along with their durations, complex temporal relations among such events, and sequences of such events; 
\item We incorporate both internal and external utilities into interval-based sequences
and propose an algorithm called \textit{HUIPMiner} to mine all high utility sequential patterns from interval-based sequences;
%We then formally define the problem of high utility interval-based sequential pattern mining, with the flexibility to choose a desired maximum length and size of the patterns;
\item We introduce the \textit{L-sequence-weighted downward closure property} (\textit{LDCP}), which is used in our pruning strategy and utilize LDCP in HUIPMiner to reduce the search space and identify high utility sequential patterns efficiently; and 
\item We report on experiments that show the proposed framework and algorithm are able to discover all high utility patterns from interval-based data even with a low minimum utility threshold.
\end{inparaenum}

The rest of the paper is organized as follows. Section 2 provides background and preliminaries. It then proposes a framework of interval-based sequence utility and finally it formulates the problem of mining high utility interval-based sequential patterns. Section 3 presents the details of the HUIPMiner algorithm and the pruning strategy. Experimental results on real datasets and evaluation are given in Section 4. Section 5 presents conclusions and future work.

\section{Problem Statement}
%In this section, we introduce definitions and preliminaries realted to   and describe the problem statement formally.%
%We adapt some definitions given in the earlier research \cite{signLanguage,chen2010efficient,uspan,wu2013} and describe the problem statement formally.
%This section introduces the preliminaries related to interval-based sequences and high utility pattern mining and then describe the problem statement formally.
%\begin{defn} %(\textit{Event interval})
Let $\sum=\{A,B,...\}$ denote a finite alphabet. A triple $e=(l,b,f)$, where $l \in \sum$ is the event label, $b \in \mathbb{N} $ is the beginning time, and $f \in \mathbb{N}$ is the finishing time $(b<f)$, is called an \textit{event-interval}.
% Where no ambiguity results, we refer to an event interval $(l,b,f )$ simply by its label $l$. We also use $e.x$ to denote element $x$ of event interval $e$.
%\end{defn}
%\begin{defn} %(\textit{E-sequence})
An event-interval sequence or \textit{E-sequence} $s=\langle e_1,e_2, ...,e_n\rangle $ is a list of $n$ event intervals ordered based on beginning time in ascending order. If event-intervals have equal beginning times, then they are ordered lexicographically by their labels. The size of E-sequence $s$, denoted as $|s|=n$, is the number of event-intervals in $s$.
%\end{defn}
%\begin{defn} %(\textit{E-sequence database})
A database $D$ that consists of set of tuples $\left\langle sid, s \right\rangle $, where $ sid $ is a unique identifier of $s$, is called an \textit{E-sequence database}.
Table \ref{DB} depicts an E-sequence database consisting of four E-sequences with identifiers 1 to 4.
%\end{defn}
\renewcommand{\arraystretch}{1.15} % Increasing the space between two rows
\begin{table*}[t] % the original was ht 
\centering
\caption{Example of an E-sequence database}
\label{DB}
 %\setlength\extrarowheight{-7pt} % reduce padding spaces horizontally
%\scalebox{1}{ % resize
\begin{tabular}{|c|p{1.3cm}|p{1.9cm}|p{1.59cm}|l|}
\hline
\textbf{sid} & \textbf{Event Label} &  \textbf{Beginning Time} & \textbf{Finishing Time} &   \multicolumn{1}{>{\centering\arraybackslash}m{6cm}|}{\textbf{Event Sequence}}  \\ \hline
\multirow{4}{*}{1} & $A$ & 8& 16& \multirow{4}{*}{ \begin{tikzpicture} 
\draw (-2,0) -- node[above] {$A$} ++ (8/3,0) (4/3,-0.5)-- node[above] {$B$} ++(3/3,0) (3.,-0.75)-- node[above] {$C$} ++(4/3,0) (3.3,-1.25)-- node[above] {$E$} ++(2/3,0);
 \end{tikzpicture}} \\ \cline{2-4}
        & $B$ & 18& 21& \\ \cline{2-4}
        & $C$ & 24& 28& \\ \cline{2-4}
        & $E$ & 25& 27& \\ \hline
\multirow{4}{*}{2} & $A$ & 1& 5& \multirow{4}{*}{ \begin{tikzpicture} 
\draw (-2,0) -- node[above] {$A$} ++ (4/3,0) (2/3,-0.3)-- node[above] {$C$} ++(6/3,0) (1,-0.80)-- node[above] {$E$} ++(3/3,0) (1,-1.2)-- node[above] {$F$} ++(3/3,0);
 \end{tikzpicture} } \\ \cline{2-4}
        & $C$ & 8& 14& \\ \cline{2-4}
        & $E$ & 9& 12& \\ \cline{2-4}
        & $F$ & 9& 12& \\ \hline
\multirow{4}{*}{3} & $B$ & 6& 12& \multirow{4}{*}{ \begin{tikzpicture} 
\draw (-2,0) -- node[above] {$B$} ++ (6/3,0) (-1.5,-0.5)-- node[above] {$A$} ++(7/3,0) (.83,-0.75)-- node[above] {$C$} ++(6/3,0) (1.5,-1.25)-- node[above] {$E$} ++(2/3,0);
 \end{tikzpicture} } \\ \cline{2-4}
        & $A$ & 7& 14& \\ \cline{2-4}
        & $C$ & 14& 20& \\ \cline{2-4}
        & $E$ & 16& 18& \\ \hline
\multirow{5}{*}{4} & $B$ & 2& 7& \multirow{5}{*}{ \begin{tikzpicture} 
\draw (-2,0) -- node[above] {$B$} ++ (5/3,0) (-1,-0.4)-- node[above] {$A$} ++(5/3,0)
(-1,-0.9)-- node[above] {$D$} ++(7/3,0) (2.2,-1.2)-- node[above] {$C$} ++(6/3,0) (2.9,-1.7)-- node[above] {$E$} ++(2/3,0);
 \end{tikzpicture}} \\ \cline{2-4}
        & $A$ & 5& 10& \\ \cline{2-4}
        & $D$ & 5& 12& \\ \cline{2-4}
        & $C$ & 16& 22& \\ \cline{2-4}
        & $E$ & 18& 20& \\ \hline 
	\end{tabular}
%	}
\end{table*}
\begin{defn} %(\textit{E-sequence sliced time})
Given an E-sequence $s = \langle (l_1, b_1, f_1),(l_2, b_2, f_2), ...,(l_n, b_n, f_n)\rangle$, the multiset $T =\{b_1, f_1, b_2, f_2, ..., b_n, f_n\}$ consists of all time points corresponding to sequence $s$. If we sort $T$ in ascending order and eliminate redundant elements, we can derive a sequence $T_s =\langle t_1, t_2, ..., t_m \rangle$, where $t_k \in T , t_k < t_{k+1}$. $T_s$ is called the \textit{E-sequence unique time points} of $s$. We denote the number of elements in $T_s$ by $|T_s|$, that is, $|T_s| = m$.
\end{defn}
\begin{defn} %(\textit{Coincidence})
\label{coincidenceDEF}
Let $s=\langle (l_1, b_1, f_1), ...,(l_j, b_j, f_j), ...,(l_n, b_n, f_n)\rangle$ be an E-sequence. A function $\Phi_s: \mathbb{N} \times \mathbb{N} \rightarrow 2^{\sum} $ is defined as:
\begin{equation}
\Phi_s(t_p,t_q)=  \{ l_j \ | \ (l_j, b_j, f_j)  \in s \ \wedge \ ( b_j \leq t_p) \wedge (t_q \leq f_j)   \}
\end{equation}
where $1\leq j \leq n$ and $t_p< t_q$.
 Given an E-sequence $s$ with corresponding E-sequence unique time points $T_s =\langle t_1, t_2, ..., t_m \rangle$, a \textit{coincidence} $c_k$ is defined as $\Phi_s(t_k,t_{k+1})$ where $t_k, t_{k+1} \in T_s$, $1 \leq k \leq m-1 $, are two consecutive time points. The duration $\lambda_k$ of coincidence $c_k$ is $t_{k+1}-t_k$. 
The size of a coincidence is the number of event labels in the coincidence. 
\end{defn}
For example, the E-sequence unique time points of $s_2$ in Table \ref{DB} is $T_{s_2}= \{1,5,8,9,12,14\}$. Coincidence $c_4=\Phi_{s_2}(9,12)= \{C,E,F\}$, $\lambda_4=3$ and $|c_4|=3$.
\begin{defn} %(\textit{Coincidence label sequence})
 %Given an E-sequence $s$, and the corresponding unique time points $T_s =\langle t_1, t_2, ..., t_m \rangle$, 
 A coincidence label sequence, or \textit{L-sequence} $L= \langle c_1c_2...c_{g} \rangle$ is an ordered list of $g$ coincidences.
% where each $c_k$, $(1 \leq k \leq g)$ is a coincidence of E-sequence $s$.
An L-sequence is called a \textit{K}-L-sequence, iff there are exactly $K$ coincidences in the L-sequence. We define the size of an L-sequence, denoted $Z$, to be the maximum size of any coincidences in the L-sequence.

\end{defn}
For example, $\langle \{B\} \{A,B\} \{A\} \rangle $ is a 3-L-sequence because it has 3 coincidences and its size is 2 because the maximum size of the coincidences in it is $max\{1,2,1\}=2$.
\subsection{The Coincidence Eventset Representation (CER)}
%As mentioned earlier,
The representations proposed in previous studies, such as \cite{wu2007mining,chen2010efficient}, do not store the durations of intervals. These approaches transform each event interval into a point-based representation encompassing only temporal relations. Although these formats are described as unambiguous, they actually leave an ambiguity with respect to duration. It is true that the temporal relations among intervals can be mapped one-to-one to the temporal sequence by these representations, but the duration for which these relations persist is ignored. Consequently, it is impossible to reverse the process and reconstruct the original E-sequence if we receive one of these representation. In this section, we address this limitation by incorporating the duration of intervals into a new representation called the \textit{coincidence eventset representation} (CER). 
\begin{defn} %(\textit{Coincidence eventset and eventset sequence})
 Given a coincidence $c_k$ in E-sequence $s$, a coincidence eventset, or \textit{C-eventset}, is denoted $\sigma_k$ and defined as an ordered pair consisting of the coincidence $c_k$ and the corresponding coincidence duration $\lambda_k$, i.e.:
 \begin{equation}
\sigma_k=(c_k, \lambda_k)   
\end{equation} 
%=(\{ l \ | \ l \in c_k  \} , t_{k+1}-t_k )
For brevity, the braces are omitted if $c_k$ in C-eventset $\sigma_k$ has only one event label, which we refer as a \textit{C-event}.
 A coincidence eventset sequence, or \textit{C-sequence}, is an ordered list of C-eventsets, which is defined as $C = \langle \sigma_1 ... \sigma_{m-1} \rangle$, where $m=|T_s|$. 
%\end{defn}
%\begin{defn} %(\textit{C-sequence database})
A \textit{C-sequence database} $\delta$ consists of a set of tuples $\left\langle sid, C \right\rangle $, where $ sid $ is a unique identifier of $C$.
\end{defn} 
For example, the E-sequences in the database shown in Table \ref{DB} can be represented by the CER to give the C-sequences shown in Table \ref{CoincidenceDB}. We denote the $sid=1$ C-sequence as $C_{\mathrm{s}_1}$; other C-sequences are numbered accordingly.
The ``$\emptyset$" symbol is used to distinguish disjoint event intervals. A ``$\emptyset$" indicates a gap between two event intervals, whereas the lack of a ``$\emptyset$" indicates that the two event intervals are adjacent.
It can be seen that CER incorporates the durations of the event intervals into the representation.
\begin{table*}[t]% ht
\centering
\caption{ C-sequence database corresponding to the E-sequences in Table \ref{DB}}
\label{CoincidenceDB}
\begin{tabular}{|c|c|}
\hline
sid & C-sequence   \\ \hline
 1 &  $\langle(A,8) (\emptyset,2)(B,3)(\emptyset,3)(C,1)(\{C,E\},2) (C,1) \rangle$        \\ \hline
 2 &  $\langle (A,4) (\emptyset,3)(C,1)(\{C,E,F\},3) (C,2) \rangle$         \\ \hline
 3 &  $\langle(B,1) (\{A,B\},5)(A,2)(C,2)(\{C,E\},2)(C,2) \rangle$         \\ \hline
 4 &$\langle(B,3)(\{A,B,D\},2)(\{A,D\},3)(D,2)(\emptyset,4)(C,2)(\{C,E\},2)(C,2) \rangle$\\ \hline
\end{tabular}
\end{table*}
\begin{defn}%(\textit{C-subsequence})
Given two C-eventsets $\sigma_a=( c_a , \lambda_a)$
and  $\sigma_b=(c_b , \lambda_b)$, $\sigma_b$ \textit{contains} $\sigma_a$, which is denoted $\sigma_a \subseteq \sigma_b$, iff $ c_a \subseteq c_b \wedge \ \lambda_a=\lambda_b$.
Given two C-sequences $C=\langle \sigma_1 \sigma_2 ... \sigma_{n} \rangle$ and $C'=\langle \sigma_1' \sigma_2' ... \sigma_{n'}' \rangle$, we say $C$ is a \textit{C-subsequence} of $C'$, denoted $C \subseteq C'$, iff there exist integers $1 \leq j_1 \leq j_2 \leq ... \leq j_n \leq n' $ such that $\sigma{_k} \subseteq \sigma_{j_k}'$ for $1 \leq k \leq n$.
%\end{defn}
%\begin{defn}(\textit{Matching})
Given a C-sequence $C=\langle \sigma_1 \sigma_2 ... \sigma_{n} \rangle= \langle (c_1, \lambda_1)(c_2,\lambda_2)...(c_n, \lambda_n) \rangle$ and an L-sequence $L= \langle c_1'c_2'...c_{m}' \rangle$, $C$ \textit{matches} $L$, denoted as $C \sim L$, iff $n = m$ and $c_k=c_k'$ for $1 \leq k \leq n$.
\end{defn}
For example, $\langle (A,2)\rangle$, $\langle (A,2)(A,3) \rangle$, and $\langle (\{A,B,D\},2) \rangle$, are C-subsequences of C-sequence $C_{\mathrm{s}_4}$, while $ \langle (\{A,F\},2) \rangle$ and $ \langle (A,2)(D,5) \rangle$ are not.
It is possible that multiple C-subsequences of a C-sequence match a given L-sequence. For example, if we want to find all C-subsequences of $C_{\mathrm{s}_4}$ in Table \ref{CoincidenceDB} that match the L-sequence $\langle A \rangle$, we obtain $\langle (A,2) \rangle$ in the second C-eventset and $\langle (A,3) \rangle$ in the third C-eventset.
\subsection{Utility}
Let each event label $l \in \sum$, be associated with a value, called the \textit{external utility}, which is denoted as $p(l)$, such that $p: \sum \rightarrow \mathbb{R}_{\geq 0}  $. The external utility of an event label may correspond to any value of interest, such as the unit profit or cost, that is associated with the event label. The values shown in Table \ref{External} are used in the following examples as the external utilities associated with the C-sequence database shown in Table \ref{CoincidenceDB}.
\begin{table} % ht
\centering
\caption{ External utilities associated with the event labels }
\label{External}
\begin{tabular}{|c|c|c|c|c|c|c|c|}
\hline
 Event label & A &B&C&D&E&F& $\emptyset$         \\ \hline
 External utility & 1&2&1&3&2&1 &0       \\ \hline
\end{tabular}
\end{table}
%For example, considering the external utilities in Table \ref{External}, the utility of C-event $(B,3)$ is $\mathrm{u}(B,3)=2 \times 3=6$.
%For example, utility of C-event $(\{A,D\},3)$ is ${\mathrm{u_e}(\{A,D\},3)=3 \times (1+3)=12}$.
%\begin{defn}(\textit{C-sequence database utility})

Let the utility of a C-event $(l,\lambda)$ be $\mathrm{u}(l,\lambda)= p(l) \times \lambda$.
The utility of a C-eventset $\sigma= (\{l_1,l_2, ..., l_n\},\lambda)$ is defined as:
$
\mathrm{u_e}(\sigma)= \sum_{i=1}^{n} \mathrm{u}(l_i,\lambda)
$. The utility of a C-sequence $C=\langle \sigma_1 \sigma_2 ... \sigma_{m} \rangle$ is defined as:
$\mathrm{u_s}(C)= \sum_{i=1}^{m} \mathrm{u_e}(\sigma_i)$.
Therefore, the utility of the C-sequence database $\delta =\{ \langle sid_1, C_{s_1} \rangle, \langle sid_2, C_{s_2} \rangle, ..., \langle sid_r, C_{s_r} \rangle\} $ is defined as:
$
\mathrm{u_d}(\delta)= \sum_{i=1}^{r} \mathrm{u_s}(C_{s_{i}})
$.
For example, the utility of C-sequence $C_{s_3}=\langle(B,1) (\{A,B\},5)(A,2)(C,2)(\{C,E\} ,2)(C,2) \rangle$ is $\mathrm{u_s}(C_{s_3})= 1 \times 2+ 5 \times (1+2) +2 \times 1+ 2 \times 1 +2 \times (1+ 2)+2 \times 1 = 29 $, 
and the utility of the C-sequence database $\delta$ in Table \ref{CoincidenceDB} is $\mathrm{u_d}(\delta)=\mathrm{u_s}(C_{s_1})+\mathrm{u_s}(C_{s_2})+\mathrm{u_s}(C_{s_3})+\mathrm{u_s}(C_{s_4})=22+19+29+46=116$.
%\end{defn}
%
%\begin{defn}(\textit{Maximum C-eventset utility in a C-sequence }) It will not be used probably!
%\begin{equation}
%\mathrm{u_{max_z}}(C)=max\{  \mathrm{u_e}(\sigma) \ | \ \sigma \in C \}
%\end{equation}
%\end{defn}
%For example, the maximum C-eventset utility of C-sequence \\$C_{s_3}=\langle(B,1) (\{A,B\},5)(A,2)(C,2)(\{C,E\},2)(C,2) \rangle$ is $\mathrm{u_{max_z}}(C_{s_3})=15$ 
\begin{defn}%(\textit{Maximum utility of k C-eventsets in a C-sequence}) 
The \textit{maximum utility of $k$ C-eventsets in a C-sequence} is defined as:
$
\mathrm{u_{max_k}}(C,k)=  max\{ \mathrm{u_s}(C ') \ | \ C' \subseteq C \ \wedge \ |C'| \leq k \ \}
$. Note: In the name of the $\mathrm{u_{max_k}}$ function, the ``k'' is part of the name rather than a parameter.
\end{defn}
For example, the maximum utility of 2 C-eventsets in C-sequence $C_{s_3}=\langle(B,1) (\{A,B\},5)(A,2)(C,2)(\{C,E\},2)(C,2) \rangle$ is $\mathrm{u_{max_k}}(C_{s_3},2)=\mathrm{u_s}(\langle (\{A,B\},5)\\ (\{C,E\},2) \rangle)=15+6=21$.
\begin{defn}%(\textit{L-sequence utility set})
Given a C-sequence database $\delta$ and an L-sequence $L= \langle c_1c_2...c_n \rangle$, the utility of $L$ in C-sequence $C=\langle \sigma_1 \sigma_2 ... \sigma_{m} \rangle \in \delta$ is defined as a \textit{utility set}:
\begin{equation}
\mathrm{u_l}(L,C)= \bigcup_{C' \sim L \wedge C' \subseteq C} \mathrm{u_s}(C')
\end{equation}
The utility of $L$ in $\delta$ is also a utility set:
\begin{equation}
\mathrm{u_l}(L)= \bigcup_{C \in \delta} \mathrm{u_l}(L,C)
\end{equation}
\end{defn}
For example, consider L-sequence $L=\langle \{B\}\{A\} \rangle$. The utility of $L$ in $C_{s_3}$ shown in Table \ref{CoincidenceDB} is $\mathrm{u_l}(L,C_{s_3})=\{\mathrm{u_s}(\langle (B,1)(A,5) \rangle),\mathrm{u_s}(\langle (B,1)(A,2)\rangle), \mathrm{u_s}(\langle (B,\\ 5)(A,2) \rangle) \}=\{7,4,12\}$, and thus the utility of $L$ in $\delta$ is $\mathrm{u_l}(L)=\{\mathrm{u_l}(L,C_{s_3}), \\ \mathrm{u_l}(L,C_{s_4})\}=\{\{7,4,12\},\{8,9,7\}\}$. 
From this example, one can see that an L-sequence may have multiple utility values associated with it, unlike a sequence in frequent sequential pattern mining. %This idea is explored further in the next section.   
\subsection{High Utility Interval-based Pattern Mining}
\begin{defn}%(\textit{Maximum L-sequence utility})
The \textit{maximum utility} of an L-sequence $L$ in C-sequence database $\delta$ is defined as $\mathrm{u_{max}}(L)$:
\begin{equation}
\mathrm{u_{max}}(L)= \sum_{C \in \delta} \mathrm{max} (\mathrm{u_{l}}(L,C))
\end{equation}
\end{defn} 
For example, the maximum utility of an L-sequence $L=\langle \{B\}\{A\} \rangle$ in C-sequence database $\delta$ shown in Table \ref{CoincidenceDB} is $\mathrm{u_{max}}(L)=0+0+12+9=21 $.

\begin{defn}%(\textit{High utility interval-based pattern})
An L-sequence $L$ is a \textit{high utility interval-based pattern} iff its maximum utility is no less than a user-specified minimum utility threshold $\xi$. Formally:
$
\mathrm{u_{max}}(L) \geq \xi \iff L \text{ is a high utility interval-based pattern.}
$
\end{defn}
\paragraph{Problem \RNum{1}:}
Given a user-specified minimum utility threshold $\xi$, an E-sequence database $D$, and  external utilities for event labels, the problem of high utility interval-based mining is to discover all L-sequences such that their utilities are at least $\xi$. 
By specifying the maximum length and size of the L-sequence, Problem \RNum{1} can be specialized to give \textbf{Problem \RNum{2}}, which is to discover all L-sequences with lengths and sizes of at most $K$ and $Z$, respectively, such that their utilities are at least $\xi$.
\iffalse  
\paragraph{Problem Statement \RNum{2}.}
Given a user-specified minimum utility threshold $\xi$, an E-sequence database $D$, external utilities for event labels, a maximum L-sequence length $K$, and a maximum L-sequence size $Z$, the problem of high utility interval-based mining is to discover all L-sequences with lengths and sizes of at most $K$ and $Z$, respectively, such that their utilities are at least $\xi$.
\fi
\section{The HUIPMiner Algorithm}
In this section, we propose the \textit{HUIPMiner} algorithm to mine high utility interval-based patterns. HUIPMiner is composed of two phases in which each iteration generates a special type of candidates of a certain length. %We call the first phase the \textit{coincident} phase and the second phase the \textit{serial} phase.
We also obtain the L-sequence-weighted downward closure property (LDCP) (Theorem \ref{theorem1}), which is similar to the sequence-weighted downward closure property (SDCP) \cite{uspan}. LDCP is utilized in the proposed pruning strategy to avoid generating unpromising L-sequence candidates.
LDCP has an advantage over SDCP since it reduces the size of the search space by using a tighter upper bound, which we present in Definition \ref{Upper}.  
%
%\begin{defn}(\textit{L-sequence-weighted utilization of an L-sequence w.r.t a maximum size $z$}: $LWU_\mathrm{z}$) Please ignore this!
%\begin{equation}
%LWU_\mathrm{z}(L,z) = \sum_{C' \sim L \wedge C' \subseteq C \wedge C  \in \delta } \mathrm{u_{maxZ}}(C,z) 
%\end{equation} 
%\end{defn}

\begin{defn}(\textit{$\mathrm{LWU}_k$}) 
\label{Upper}
The L-sequence-weighted utilization of an L-sequence w.r.t. a maximum length $k$ is defined as:
\begin{equation}
\mathrm{LWU}_k(L) = \sum_{C' \sim L \wedge C' \subseteq C \wedge C  \in \delta } \mathrm{u_{max_k}}(C,k) 
\end{equation} 
\end{defn}
For example, the L-sequence-weighted utilization of $L=\langle \{B\}\{A\} \rangle$ w.r.t. the maximum length $k=2$ in the C-sequence database shown in Table \ref{CoincidenceDB} is $\mathrm{LWU}_2(\langle \{B\}\{A\} \rangle)=0+0+21+24=45$.
%\iffalse
\begin{thm}[L-sequence-weighted downward closure property]
\label{theorem1}
Given a C-sequence database $\delta$ and two L-sequences $L$ and $L'$, where $L \subseteq L'$ and $|L'| \leq k$, then
\begin{equation}
\mathrm{LWU}_k(L') \leq \mathrm{LWU}_k(L)
\end{equation} 
\end{thm}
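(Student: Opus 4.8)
The plan is to reduce the inequality to a single set inclusion between the index sets of the two defining sums, plus the trivial fact that the summands are non‑negative. First I would put $\mathrm{LWU}_k$ in a convenient form. In the defining sum of Definition~\ref{Upper} the summand $\mathrm{u_{max_k}}(C,k)$ depends only on the host C‑sequence $C$, not on the particular witness $C'$, and (as the worked example after Definition~\ref{Upper} shows) each $C\in\delta$ contributes exactly once. So, writing $\delta_L=\{\,C\in\delta : \exists\,C' \text{ with } C'\sim L \text{ and } C'\subseteq C\,\}$ for the set of C‑sequences that support $L$, we have $\mathrm{LWU}_k(L)=\sum_{C\in\delta_L}\mathrm{u_{max_k}}(C,k)$, and similarly for $L'$. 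The whole theorem then becomes: $\delta_{L'}\subseteq\delta_L$.

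The core step is to prove this inclusion, i.e.\ that any C‑sequence supporting the larger pattern $L'$ also supports $L$. Fix $C\in\delta_{L'}$ and choose a witness $C''\sim L'$ with $C''\subseteq C$, realized by indices $j_1\le\cdots\le j_{|L'|}$ so that $\sigma^{C''}_{l}\subseteq\sigma^{C}_{j_l}$ for each $l$. Since $L\subseteq L'$, there are indices $i_1\le\cdots\le i_{|L|}$ such that the $k$‑th coincidence of $L$ is a subset of the $i_k$‑th coincidence of $L'$. Because $C''\sim L'$, the $i_k$‑th coincidence of $C''$ equals the $i_k$‑th coincidence of $L'$, and $\sigma^{C''}_{i_k}\subseteq\sigma^{C}_{j_{i_k}}$ forces equal durations. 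I would then build $C'$ whose $k$‑th C‑eventset is the pair consisting of the $k$‑th coincidence of $L$ together with the duration $\lambda^{C}_{j_{i_k}}$. By construction $C'\sim L$; and chaining the two containments gives, for every $k$, that the $k$‑th coincidence of $L$ is contained in the $j_{i_k}$‑th coincidence of $C$ with matching duration, so $C'\subseteq C$ witnessed by the non‑decreasing list $j_{i_1}\le\cdots\le j_{i_{|L|}}$. Hence $C\in\delta_L$, proving $\delta_{L'}\subseteq\delta_L$.

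Finally I would close with non‑negativity: every external utility satisfies $p(l)\ge 0$ and every coincidence duration is a positive integer, so $\mathrm{u_s}(C')\ge 0$ for every C‑sequence and therefore $\mathrm{u_{max_k}}(C,k)\ge 0$. Combining this with $\delta_{L'}\subseteq\delta_L$,
\[
\mathrm{LWU}_k(L')=\sum_{C\in\delta_{L'}}\mathrm{u_{max_k}}(C,k)\ \le\ \sum_{C\in\delta_L}\mathrm{u_{max_k}}(C,k)=\mathrm{LWU}_k(L),
\]
which is the assertion. I expect the only delicate point to be the index bookkeeping in the middle paragraph: pinning down the (implicit) definition of L‑sequence containment used here — subset on coincidences with non‑strict order on positions, by analogy with the C‑subsequence definition — and checking that composing it with the witness $C''\subseteq C$ really yields a legitimate C‑subsequence of $C$ matching $L$. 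Everything else, including the hypothesis $|L'|\le k$ (which only keeps us inside the regime of Problem~\RNum{2} and is not actually needed for the inequality), is routine.
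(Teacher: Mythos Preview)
Your proposal is correct and follows essentially the same route as the paper: both arguments reduce the inequality to the inclusion of support sets (the paper's $Q'\subseteq Q$ is exactly your $\delta_{L'}\subseteq\delta_L$) and then invoke non-negativity of the summands. Your version is simply more explicit about the index bookkeeping and about how the one-contribution-per-$C$ reading of the $\mathrm{LWU}_k$ sum is obtained, whereas the paper's proof sketches the same idea in a couple of lines.
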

%\iffalse
\begin{proof}
Let $\alpha$ and $\beta$ be two C-subsequences that match the L-sequences $L$ and $L'$, respectively. Since $L \subseteq L'$, then $\alpha\subseteq \beta$. 
Let $Q' \in \delta$ be the set of all C-sequences containing $\beta$ and $Q \in \delta$ be the set of all C-sequences containing $\alpha$.
Since $\alpha \subseteq \beta$, then $Q$ must be a superset of $Q'$, that is, $Q \supseteq Q'$. Therefore, we infer 
\begin{equation}
\sum_{\substack{\beta \sim L'  \wedge \beta \subseteq C'  \wedge C'  \in Q' }} \mathrm{u_{max_k}}(C',k) \leq
\sum_{\substack{\alpha \sim L  \wedge \alpha \subseteq C  \wedge C  \in Q }} \mathrm{u_{max_k}}(C,k)
\end{equation}
and equivalently we derive $\mathrm{LWU}_k(L') \leq \mathrm{LWU}_k(L)$.
\end{proof}
%\fi
Algorithm \ref{HUIPMinerAlg} shows the main procedure of the HUIPMiner algorithm. The inputs are: (1) a C-sequence database $\delta$, (2) a minimum utility threshold $\xi$, (3) a maximum pattern length $K \geq 1$, and (4) a maximum pattern size $Z \geq 1$. The output includes all high utility interval-based patterns. The algorithm has two phases, a \textit{coincident phase} to obtain high utility coincidence patterns (L-sequences with lengths equal to 1) and a \textit{serial phase} to obtain high utility serial patterns (L-sequences with lengths greater than 1).  
\subsection{The Coincident Phase}
The coincident phase, which is the first phase of HUIPMiner (Lines 1-13), generates coincidence candidates by concatenating event labels. 
\begin{defn}%(\textit{Coincident concatenation})
\label{Cconcat}
Let $c=\{l_1,l_2,...,l_n\}$ and $c'=\{l_1',l_2',...,l_m'\}$ be two coincidences. The \textit{coincident concatenation} of $c$ and $c'$ is the ordinal sum of the coincidences and is defined as coincident-concat$(c,c')=(c \cup c', \leq)=c \oplus c'$. 
\end{defn}
For example, coincident-concat$(\{A,B\},\{A,C\})=\{A,B,C\}$.

In the first round of this phase, all event labels are considered as coincidence candidates with a size of 1 (Line 1). Then, the algorithm searches each C-sequence to find matches to these candidates. Next, it calculates the maximum utility $\mathrm{u_{max}}$ and L-sequence-weighted utilization $\mathrm{LWU}_k$ of each candidate. If $\mathrm{u_{max}}$ for a candidate is no less than the given threshold $\xi$, then the candidate is classified as a high utility coincident pattern and placed in set HUCP. 
For example, suppose we want to find all HUIPs of Table \ref{CoincidenceDB} when the threshold is 14, the maximum size of a coincidence $Z$ is 2, and the maximum length of an L-sequence $K$ is 2. For simplicity, suppose all event labels have equal external utilities of 1.    
Table \ref{HUIPM1} shows the coincidence candidates of size 1 and their maximum utilities and L-sequence-weighted utilizations, which are denoted $\mathrm{LWU}_2$. At the end of the first round, \{A\} is the only candidate that is added into HUCP because $\mathrm{u_{max}(\langle \{A\} \rangle)} \geq 14$.
 
\begin{table} % ht
\centering
\caption{ HUIPMiner example - coincidence phase }
\label{HUIPM1}
\begin{tabular}{|c|c|c|c|c|c|c|}
\hline
Candidate 			& \{A\} & \{B\} & \{C\} & \{D\} &\{E\} & \{F\} \\ \hline
$\mathrm{u_{max}}$ & 20 & 11& 9 & 3 & 9 & 3  \\ \hline
$\mathrm{LWU}_2$ & 51 & 38& 51 & 12 & 51 & 13 \\ \hline
 
\end{tabular}
\end{table}
Before the next round is started, coincidence candidates of size 2 are generated. In order to avoid generating too many candidates, we present a pruning strategy, which is based on the following definition.
\begin{defn}%(\textit{Promising coincidence candidate})
 A coincidence candidate $c$ is \textit{promising} iff $\mathrm{LWU}_k(c) \geq \xi$. Otherwise it is unpromising.   
\end{defn}
\noindent \textit{Property.}  Let $a$ be an unpromising coincidence candidate and $a'$ be a coincidence. Any superset produced by coincident-concat$(a,a')$ is of low utility.   
\\ \textit{Rationale.} Property 1 holds by the LDCP property (Theorem \ref{theorem1}).
\\ \textit{Pruning strategy.} Discard all unpromising coincidence candidates. \\
If the $\mathrm{LWU}_k$ value of a candidate is less than $\xi$, the candidate will be discarded since it is unpromising. If the $\mathrm{LWU}_k$ value of a candidate is no less than $\xi$, the candidate is promising and thus it will be added to set $P$, the set of promising candidates for the current run. The HUIPMiner algorithm also extracts the unique elements of the promising candidates (Line 10). Before the algorithm performs the next round, $P$ is added into WUCP, which is the set of all weighted utilization coincident patterns with sizes up to $Z$. WUCP is later used in the serial phase.
In our example, the algorithm prunes (discards) \{D\} and \{F\} in the first round because their $\mathrm{LWU}_2$ values are less than 14. Therefore, \{D\} and \{F\} will not be involved in generating candidates for the second round. 
 \{A\}, \{B\}, \{C\} and \{E\} are identified as promising candidates and added into $P$. Then, coincidence candidates of size 2 are generated for the next round by calling the \textit{Ccandidate} procedure and sending $P$ and the unique elements as input arguments (Definition \ref{Cconcat}). The algorithm repeats this procedure until it reaches $Z$ or no more candidates can be generated. At the end of this phase, the algorithm has found all high utility coincident patterns and stored them in HUCP; it has also found all weighted utilization coincident patterns of maximum size $Z$ such that $\mathrm{LWU}_k$ is no less than $\xi$ and stored them in WUCP. In the serial phase, WUCP is used to find the serial patterns.
%For example, In the second round, the two promising candidates \{A,B\} and \{C,E\} are added into WUCP.
% Next, the serial phase is started by serial concatenation of \{A\} with WUCP members. 
 
\subsection{The Serial Phase}
In the serial phase, the second phase of HUIPMiner (Lines 14-27), serial candidates are generated by concatenating the weighted utilization coincident patterns found in the first phase. 
\begin{defn}%(\textit{Serial concatenation})
\label{Sconcat}
Let $L=\langle c_1,c_2,...,c_n \rangle $ and $L'= \langle c_1',c_2', ...,c_m' \rangle$ be two L-sequences. The \textit{serial concatenation} of $L$ and $L'$ is defined as serial-concat$(L,L') \\ =\langle c_1,c_2,...,c_n,c_1',c_2',..., c_m' \rangle$.
\end{defn} 
 For example, the serial concatenation of two L-sequences  $L=\langle \{A,B\}, \{A,C\} \rangle$ and $L'=\langle \{E\},\{D,C,F\} \rangle$ is $L''=\langle \{A,B\},\{A,C \}, \{E\},\{D,C,F\} \rangle$.

In the first round of this phase, all serial L-sequence candidates of length 2 are generated. For this purpose, each coincident pattern $w$ in WUCP is used to generate serial L-sequence candidates that start with $w$ as the first coincidence of the L-sequence. This is done by calling the \textit{Scandidate} procedure and sending $w$ and WUCP as input arguments (Definition \ref{Sconcat}). Then, the algorithm searches each C-sequence in the C-sequence database to find matches to serial L-sequence candidates. The search for matches in this phase is more challenging than the search in the coincidence phase. It requires that the order of the coincidences also be taken into account. Therefore, it adds more complexity as the length of the L-sequence increases. After matches are found, as in the coincidence phase, the algorithm calculates $\mathrm{u_{max}}$ and $\mathrm{LWU}_k$ of every serial candidate. If $\mathrm{u_{max}}$ for a candidate $l$ is no less than the given threshold $\xi$, then $l$ is classified as a high utility serial pattern (HUSP). If $\mathrm{LWU}_k$ for a serial candidate $l$ is no less than threshold $\xi$, then $l$ is added into the set of promising candidates $P$.
In order to generate longer serial candidates, the algorithm extracts the unique coincidences located at the $k^{th}$ position of the candidate (last coincidence) and stores them in $NewL$.
%Next, serial candidates of length 3 are generated for the next round by calling the \textit{Scandidate} procedure and sending $P$ and $NewL$ as input arguments.
Next, Scandidate procedure generates serial candidates of length 3 for the next round by serially concatenating $P$ and $NewL$. 
The algorithm repeats these steps until it reaches the maximum length of patterns $K$ or no more candidates can be generated. At the end of this phase, the algorithm has found all high utility serial patterns with lengths up to $K$ and stored them in HUSP. After the serial phase ends, the high utility coincident and serial patterns are sent to the output. \\
%%%%%%%%% implemented
\begin{algorithm}[h]
\caption{HUIPMiner: High Utility Interval-based Pattern Miner}\label{HUIPMinerAlg}
  %\begin{multicols}{2}
  \begin{minipage}[t]{6cm}
\null 
  {\setlength{\algoheightrule}{0pt}
\setlength{\algotitleheightrule}{0pt}%
\begin{algorithm2e}[H]
\DontPrintSemicolon
\SetKwFunction{Ccandidate}{Ccandidate}\SetKwFunction{Scandidate}{Scandidate}
 \KwIn{A C-sequence database $\delta$, minimum utility threshold $\xi$, maximum length $K \geq 1$, maximum size $Z \geq 1$ } 
 \KwOut{All high utility interval-based patterns $HUIP$}
 Initialize the set of high utility coincident patterns $HUCP=\emptyset$, the set of weighted utilization coincident patterns $WUCP=\emptyset$, $z=1$, and $C^z=$ all event labels\;
\While{$z \leq Z$ and $C^z \neq \emptyset$} {
 $P=\emptyset$, $NewL=\emptyset$ \;
\For{each candidate $c$ in $C^z$}{
  	 Find $c$ in $\delta$ and Calculate $\mathrm{LWU}_K(c)$ \;
    \If{$\mathrm{u_{max}}(c)\geq \xi$}
    { $HUCP=HUCP \cup c$
    }
     \If{$\mathrm{LWU}_K(c) \geq \xi$}
     { $P=P \cup c$ \;
     $NewL=NewL \cup \{p \ | \ p \in c \}$\;
      }
       }
       $WUCP=WUCP \cup P$\;
       $z=z+1$\;
       $C^{z}=\text{Ccandidate}(P,NewL)$\;
    }
  \end{algorithm2e}
  }
%\columnbreak
\end{minipage}
\begin{minipage}[t]{5.9cm}
\null 
{\setlength{\algoheightrule}{0pt} % thickness of the rules above and below
\setlength{\algotitleheightrule}{0pt}%  % thicknes of the rule below the title
  \begin{algorithm2e}[H]
  \DontPrintSemicolon
  \setcounter{AlgoLine}{13}%% split the first alg and continue from 28 
 Initialize the set of high utility serial patterns $HUSP=\emptyset$ and $k=2$ \;
\For{each weighted utilization pattern $w$ in $WUCP$} {
 $L^{k}=Scandidate(w,WUCP)$ \;
\While{$k \leq K$ and $L^k \neq \emptyset$} {
%    	 $S= PatternFinder(L^k,\xi)$ \;
   $P=\emptyset$, $NewL=\emptyset$ \;
  \For{each candidate $l$ in $L^k$}{
  	 Find $l$ in $\delta$ and Calculate $\mathrm{LWU}_K(l)$ \;
    \If{$\mathrm{u_{max}}(l)\geq \xi$} {
     $HUSP=HUSP \cup l$ \;
    }
     \If{$\mathrm{LWU}_K(l)\geq \xi$}{
     $P=P \cup l$ \;
      $NewL=NewL \cup \{ k^{th}$ coincidence in $l \}$\;
      }
    }
 		 		  
 		   $k=k+1$ \;
 		   $L^{k}=Scandidate(P,NewL)$ \;
}  
}
 $HUIP=HUCP \cup HUSP$\;
\end{algorithm2e}
}
% \end{multicols}
\end{minipage}
\end{algorithm}
%%%%%%%%%%%%%%%%%%%%%%%%%%%%%%%%%
\section{Experiments}
The HUIPMiner algorithm was implemented in C++11 and tested on a desktop computer with a 3.2GHz Intel Core 4 CPU and 32GB memory. 
%in C++11 using the g++ compiler
We used four real-world datasets from various application domains in our experiments to evaluate the performance of HUIPMiner. 
The datasets include three publicly available datasets, namely Blocks \cite{morchenSensor}, Auslan2 \cite{morchenSensor}, ASL-BU \cite{signLanguage}, and a private dataset, called DS, obtained from our industrial partner.
DS includes event labels corresponding to various services offered to customers. An E-sequence in this dataset represents a customer receiving services.
The minimum, maximum and average external utilities associated with the event labels in DS are 10, 28, and 18, respectively. There are no external utilities associated with the public datasets. Therefore, we assume every event label in these datasets have an external utility of 1. The statistics of the datasets are summarized in Table \ref{StatDB}.
%The statistic of external utilities associated with the services are shown in Table \ref{ExtDB}.     
\begin{table*}[ht] % ht
\centering
\caption{ Statistical information about datasets }
\label{StatDB}
\begin{tabular}{|c|c|c|c|c|c|c|c|c|c|c| }
\hline
 Dataset &\multicolumn{1}{p{1.35cm}|}{\centering \# \hspace{0pt} Event Intervals  } &  \multicolumn{1}{p{1.75 cm}|}{\centering \hspace{0pt} \#  E-sequences} & \multicolumn{3}{c|}{E-sequence Size} & \multicolumn{1}{p{1 cm}|}{\centering \# \\ Labels} & \multicolumn{4}{c|}{Interval  Duration}   \\ 
   		 & & &	 min	& max & avg&   &min	& max & avg & stdv   \\ \hline
  Blocks & 1207 & 210 &3 &12 &6 &8 &1&57&17  &12   \\ \hline
   Auslan2 & 2447 & 200 & 9 & 20& 12 &12  &1&30& 20 & 12       \\ \hline
    ASL-BU & 18250 & 874 &3 & 40&17 & 216 & 3 &4468& 594 & 590       \\ \hline
     DS & 71416 & 10017& 4 & 14& 8& 15& 1 &484  &70  & 108     \\ \hline
\end{tabular}
\end{table*}
\subsection{Performance Evaluation}
We evaluate the performance of HUIPMiner on the four datasets in terms of their execution time and the number of extracted high utility patterns, while varying the minimum utility threshold $\xi$ and the maximum length of patterns $K$. These two evaluations are shown on a log-10 scale in Figure \ref{ExThreshold} and Figure \ref{ExLength}, respectively. 
The execution time of HUIPMiner in seconds is shown on the left and the number of patterns discovered by HUIPMiner is presented on the right of the two figures.
%In order to test the proposed algorithm under proper stress, 
The maximum size of patterns $Z$ is set to 5 in all experiments.
% In both Figures, the execution time of HUIPMiner on the four datasets are shown on the left side and the number of extracted patterns corresponding to the datasets are presented on the right side.

Figure \ref{ExThreshold} shows the evaluation of the HUIPMiner on the datasets while varying $\xi$ and keeping $K$ set to 4. 
The algorithm is able to discover a large number of HUIPs in a short time, especially for smaller datasets. For instance, the algorithm can extract more than 4500 HUIPs in about 60 seconds from Blocks under a low minimum utility. It is evident that as $\xi$ increases, the execution time drops exponentially and fewer patterns are discovered. This is especially well supported for larger datasets like ASL-BU and DS. Apart from the way that event intervals are distributed, the large number of event labels in ASL-BU are the major factor that contributes to high computational costs for extracting patterns. Similarly, the large number of E-sequences in DS requires more execution time to extract patterns from this dataset. The results also show that HUIPMiner is effective at finding patterns for small thresholds.
  
Figure \ref{ExLength} shows the evaluation of the HUIPMiner on the four datasets when $K$ is varied between 1 and 4. In these experiments, a small $\xi$ corresponding to each dataset is used to benchmark the algorithm.
As shown in Figure \ref{ExLength}, HUIPMiner discovers a high number of HUIPs from Blocks in a short time when $\xi$ is set to 0.02. The algorithm performs similarly on Auslan2 when $\xi=0.01$. 
When the algorithm is applied to ASL-BU and DS, patterns are discovered at lower speeds than from the two other datasets, 
when the minimum thresholds are set to 0.1 and 0.05, respectively.
As expected, $K$ plays an important role in determining both the execution time of the algorithm and the number of extracted patterns. As $K$ increases, the execution time increases and more patterns are discovered.

In general, the performance of the algorithm depends on the dataset characteristics (mentioned in Table \ref{StatDB}) as well as the parameters used in the experiments ($Z$, $K$, $\xi$). The experiments show that HUIPMiner can successfully extract high utility patterns from datasets with different characteristics under various parameters setups.    
%%%%%%%%%%%%%%%%%%%%%%%%%%%
%%%%%%%%%%%%%%%%%%%%%%
%%%%%% FIGGGGGGGGGGGGGGGGGGGGGG 1111111111111111
    \newenvironment{customlegend}[1][]{%
        \begingroup
        % inits/clears the lists (which might be populated from previous
        % axes):
        \csname pgfplots@init@cleared@structures\endcsname
        \pgfplotsset{#1}%
    }{%
        % draws the legend:
        \csname pgfplots@createlegend\endcsname
        \endgroup
    }%

    % makes \addlegendimage available (typically only available within an
    % axis environment):
    \def\addlegendimage{\csname pgfplots@addlegendimage\endcsname}
%%%%%%%%%%%%%%%%%%%% Threshold
%%%%%%%%%%%%%%%%%%%%% FIGGGGGGGGGGGGGGGGGG1
\begin{figure}[ht]
\pgfplotsset{width=3cm}
%\pgfplotsset{compat=1.16}
\begin{subfigure}{.45 \linewidth}\centering
%%%%%%%%%%%%%%%%%%%%%%%% BLOCKS
\begin{tikzpicture}%[scale=\linewidth/15cm]

\begin{axis}[ tick label style={/pgf/number format/fixed},scale only axis,  xlabel= $\xi$, ylabel= Time (s),ymode=log,log basis y={10}]
    %% BLOCKS Times
	\addplot[color=black,mark=square] coordinates {
    (0.02,62.716) (0.052,35.470) (0.078,25.420)
 (0.10,19.488)(0.13,14.603)
    };
	%%%%%%%%% Auslan TIMEs
    \addplot[color=blue,mark=star] coordinates {
 (0.012,216.171) (0.015,202.625) (0.03,26.441)
 (0.045,21.494)(0.08,3.618)
    };
	%%%%%%%%% ASL-BU TIMEs
	\addplot[color=red,mark=o] coordinates {
 (0.10,13895.1) (0.111,6311.71) (0.123,3269.8)
 (0.134,2018.5)(0.145,1367.5)
     };
	 %%%%%%%%% DS TIMEs
\addplot[color=violet,mark=diamond] coordinates {
    (0.05,5703.29) (0.07,3150.49) (0.09,1990.58)
 (0.11,1224.31)(0.13,827.034)
    }; 
   	\end{axis}
   	\end{tikzpicture}
	\subcaption{Execution time}
	\label{ExeTime}
\end{subfigure}
%%%%%%%%%%%%%%%%%%%%%%%%%%%%
%%%%%%%%%%%%%%%%%%%%%%%%%  Number of patterns
\begin{subfigure}{.45 \linewidth}\centering
\begin{tikzpicture}
\begin{axis}[tick label style={/pgf/number format/fixed},scale only axis, xlabel= $\xi$, ylabel= Number of Patterns,ymode=log,log basis y={10}]
    %%%% Blocks number of patterns
	\addplot[color=black,mark=square] coordinates {
    (0.02,4589) (0.052,1996) (0.078,1171)
 (0.10,585)(0.13,222)
  };   %%%% Auslan number of patterns
   \addplot[color=blue,mark=star] coordinates {
    (0.012,2281)(0.015,1282)(0.03,161)(0.045,50)(0.08,19)
  }; 
  %%%% ASL-BU number of patterns
	\addplot[color=red,mark=o] coordinates {
    (0.10,1835) (0.111,475) (0.123,137)
 (0.134,41)(0.145,7)
  };
  %%%% DS number of patterns
   \addplot[color=violet,mark=diamond] coordinates {
    (0.05,939) (0.07,356) (0.09,178)
 (0.11,86)(0.13,49)
  };
   	\end{axis}
   	\end{tikzpicture}
	\subcaption{Number of patterns}
	\label{NumPattern}
	\end{subfigure}
	\begin{tikzpicture} \centering
        \begin{customlegend}[legend columns=4,legend style={at={(9.5,2.5)},anchor=south,align=center,draw=none,column sep=2ex},legend entries={Blocks ,Auslan ,ASL-Bu ,DS }]
        \addlegendimage{color=black,mark=square}
        \addlegendimage{color=blue,mark=star} 
        \addlegendimage{color=red,mark=o}
        \addlegendimage{color=violet,mark=diamond}   
        \end{customlegend}
        \end{tikzpicture}
	\caption{Execution time and number of patterns for different $\xi$ values  }
	\label{ExThreshold}
\end{figure}
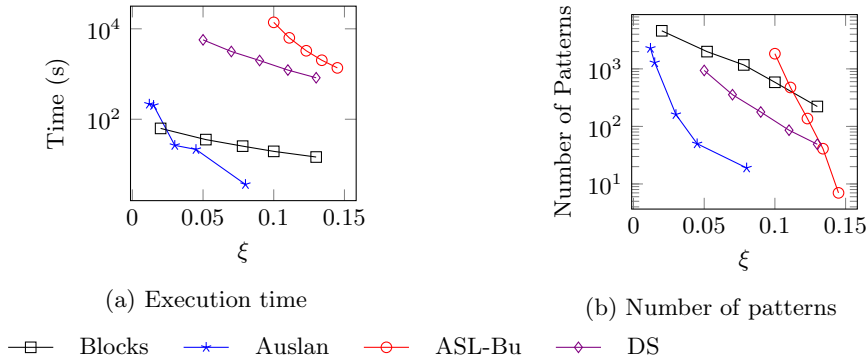
%%%%%%%%%%%%%%%%%%%%%%%%%%%%%%%%%%%%%%%%%%%
%%%%%%%%%%%%%%%%%%%%% FIGGGGGGGGGGGGGGGGGG2
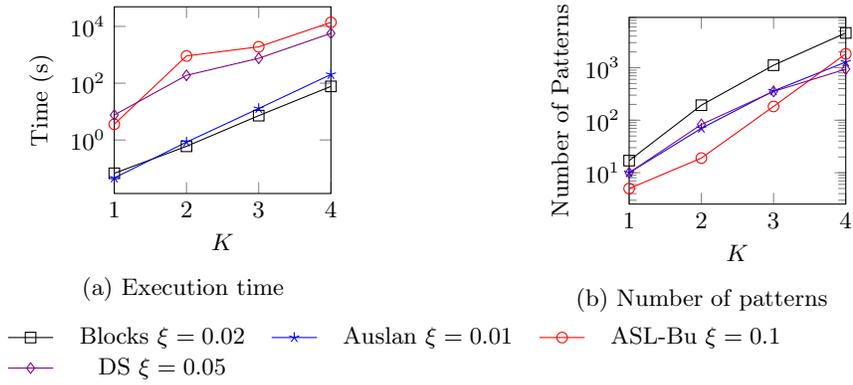
\begin{figure}[ht]
\pgfplotsset{width=2.88cm}
%\pgfplotsset{compat=1.16}
\begin{subfigure}{.4 \linewidth}\centering
%%%%%%%%%%%%%%%%%%%%%%%% BLOCKS
\begin{tikzpicture}%[scale=\linewidth/15cm]

\begin{axis}[ scale only axis, xmin=1,xmax=4, xtick={1,2,3,4}, xlabel= $K$, ylabel= Time (s),ymode=log,log basis y={10}]
    %% BLOCKS Times
	\addplot[color=black,mark=square] coordinates {
    (1,0.067)          
    (2,0.598)(3,7.196)
    (4,77.142)
    };
	%%%%%%%%% Auslan TIMEs
    \addplot[color=blue,mark=star] coordinates {
 (1,0.046) (2,0.840) (3,12.835)
 (4,202.625) 
    };
	%%%%%%%%% ASL-BU TIMEs
	\addplot[color=red,mark=o] coordinates {
 (1,3.58) (2,910.856) (3,1909.52)
 (4,13895.1)
     };
	 %%%%%%%%% DS TIMEs
\addplot[color=violet,mark=diamond] coordinates {
    (1,7.591)          
    (2,188.899)(3,753.369)
    (4,5703.29)
    }; 
   	\end{axis}
   	\end{tikzpicture}
	\subcaption{Execution time}
	\label{ExeTime}
\end{subfigure}
%%%%%%%%%%%%%%%%%%%%%%%%%%%%
%%%%%%%%%%%%%%%%%%%%%%%%%  Number of patterns
\begin{subfigure}{.47 \linewidth}\centering
\begin{tikzpicture}
\begin{axis}[scale only axis, xmin=1,xmax=4, xtick={1,2,3,4}, xlabel= $K$, ylabel= Number of Patterns,ymode=log,log basis y={10}]
    %%%% Blocks number of patterns
	\addplot[color=black,mark=square] coordinates {
    (1,17)
    (2,193)(3,1115)
    (4,4589)
  };   %%%% Auslan number of patterns
   \addplot[color=blue,mark=star] coordinates {
    (1,10)
    (2,70)(3,361)
    (4,1282)
  }; %%%% ASL-BU number of patterns
	\addplot[color=red,mark=o] coordinates {
    (1,5)
    (2,19)(3,183)
    (4,1835)
  };
  %%%% DS number of patterns
   \addplot[color=violet,mark=diamond] coordinates {
    (1,10)
    (2,83)(3,353)
    (4,939)
  };
   	\end{axis}
   	\end{tikzpicture}
	\subcaption{Number of patterns}
	\label{NumPattern}
	\end{subfigure}
	\begin{tikzpicture} \centering
        \begin{customlegend}[legend columns=3,legend style={at={(9.5,2.5)},anchor=south,align=center,draw=none,column sep=2ex},legend entries={Blocks $\xi=0.02$,Auslan $\xi=0.01$,ASL-Bu $\xi=0.1$,DS $\xi=0.05$}]
        \addlegendimage{color=black,mark=square}
        \addlegendimage{color=blue,mark=star} 
        \addlegendimage{color=red,mark=o}
        \addlegendimage{color=violet,mark=diamond}   
        \end{customlegend}
        \end{tikzpicture}
	\caption{Execution time and number of patterns for different $K$ values }
	\label{ExLength}
\end{figure}
%%%%%%%%%%%%%%%%%%%%%%%%%%%%%%%%%%%%%%%%%%%
\subsection{Effect of pruning strategies}
The computational benefits of the proposed pruning strategy is also evaluated. We compare our pruning strategy, which is based on the LDCP property, against a pruning strategy based on the SDCP property and also against the execution of HUIPMiner when no pruning strategy is applied. Figure \ref{Prunning} shows the time for the strategies on Blocks dataset with $\xi=0.02$. The LDCP based pruning strategy is a dominant winner on this dataset in comparison with no pruning. LDCP is also more efficient than SDCP, especially when the maximum length of patterns increases. This result is further supported in Fig \ref{Prunning2} where LDCP is compared against SDCP on the Auslan2 dataset. Similar results were obtained with various values of $\xi$ and on other datasets. 
%%%%%%%%%%%%%%%%%%% BLOCKS PRUNNING
\begin{figure}[ht]
\pgfplotsset{width=3.5cm}
\begin{subfigure}{.5 \linewidth}\centering
\begin{tikzpicture}
\begin{axis}[scale only axis, xmin=1,xmax=4, xtick={1,2,3,4},ytick={0,50,...,350}, ymin=0,ymax=350, xlabel= $K$, ylabel= Time (s),legend pos=north west]
 \addplot[color=red,mark=x,mark size=2.9pt] coordinates {
    (1,0.079)          
    (2,1.331)(3,19.561)
    (4,310.839)
    };
     \addlegendentry{No Pruning} 
	\addplot[color=blue,mark=o] coordinates {
    (1,0.067)          
    (2,0.598)(3,7.196)
    (4,77.142)
    };
\addlegendentry{SDCP}   
       \addplot[color=black,mark=square] coordinates {
    (1,0.056)          
    (2,0.518)(3,6.134)
    (4,62.716)
    };
    \addlegendentry{LDCP}  
   \end{axis}
\end{tikzpicture}
\subcaption{Blocks: $\xi=0.02$}
\label{Prunning}
\end{subfigure}
%%%%%%%%%%%%%%%%%%%%%%%%%%%%%%%%%%%%%%%%%%
%%AUSLAN SDCP VS LDCP 2222222222222
\begin{subfigure}{.5 \linewidth}\centering
\begin{tikzpicture}
\begin{axis}[scale only axis, xmin=1,xmax=5, xtick={1,2,3,4,5}, ymin=0,ymax=80, xlabel=$K$, ylabel= Time (s),legend pos=north west]
	\addplot[color=blue,mark=o	] coordinates {
    (1,0.032)          
    (2,0.155)(3,1.142)
    (4,8.824)(5,69.290)    };
\addlegendentry{SDCP}   
       \addplot[color=black,mark=square] coordinates {
    (1,0.009)          
    (2,0.076)(3, 0.529)
    (4, 3.618)(5, 29.990)
    };
    \addlegendentry{LDCP}  
   \end{axis}
\end{tikzpicture}

\subcaption{Auslan: $\xi=0.08$}
\label{Prunning2}
\end{subfigure}
\caption{Comparison of pruning strategies}
\end{figure}
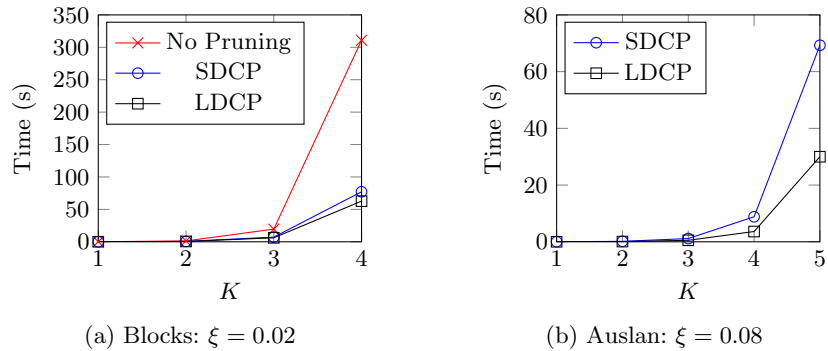
%%%%%%%%%%%%%%%%%%%%%%%%%%%%%%%%%%%%%%%%%%
\section{Conclusions and Future Work}
Mining sequential patterns from interval-based data is more challenging than mining from point-based data due to the existence of complex temporal relations among events. Seeking high utility patterns increases the complexity of the problem because the downward closure property does not hold. In this paper, we proposed the coincidence eventset representation to express temporal relations among events along with the duration of events. This representation simplifies the description of complicated temporal relations without losing information. We incorporated the concept of utility into interval-based data and provided a novel framework for mining high utility interval-based sequential patterns. An effective algorithm named HUIPMiner was proposed to mine patterns. Furthermore, in order to mine the dataset faster, a pruning strategy based on LDCP was proposed to decrease the search space. Experimental evaluations have shown that HUIPMiner is able to identify patterns with low minimum utility.

Utility mining in interval-based sequential data could provide benefits in diverse applications. For instance, more industries could take advantage of the utility concept to model their monetary or non-monetary considerations. In medicine, alternatives for  courses of treatment over a long period may have different utilities. Our approach could be applied to find high utility alternatives from records of many patients with long-lasting diseases. Similarly, managers could utilize the high utility patterns in making decisions about increasing profits based on many sequences of events with durations.   
 %%%%%%%%%%%copy  up to here	
\paragraph*{Acknowledgments.} 
The authors wish to thank Rahim Samei (Technical Manager at ISM Canada) and the anonymous reviewers for the insightful suggestions. This research was supported by funding from ISM Canada and NSERC Canada.
%\newpage
%\clearpage
%%%%%%%%%%%%%%%% END HERE
%\newpage
%\clearpage
%\nobalance
%\bibliographystyle{ACM-Reference-Format}
 \bibliographystyle{splncs} 
% first uncomment , compile, restore
%\bibliography{bibfile} % ShortBibfile
\let\chapter\section\bibliography{bibfile} %No page breaks before bibliography
%% else use the following coding to input the bibitems directly in the
%% TeX file.
%\begin{thebibliography}{00}
%% \bibitem[Author(year)]{label}
%% Text of bibliographic item
%\bibitem[ ()]{}
%\end{thebibliography}
\end{document}
\endinput
%%
%% End of file `elsarticle-template-num-names.tex'.